\theoremstyle{plain}
\newtheorem{theorem}{Theorem}[section]
\theoremstyle{definition}
\theoremstyle{remark}
\newtheorem{remark}[theorem]{Remark}
\icmltitlerunning{KromHC: Manifold-Constrained Hyper-Connections with Kronecker-Product Residual Matrices}
\definecolor{okgreen}{RGB}{0,150,0}
\begin{document}

\twocolumn[
  \icmltitle{KromHC: Manifold-Constrained Hyper-Connections with Kronecker-Product Residual Matrices}

  \icmlsetsymbol{equal}{*}

  \begin{icmlauthorlist}
    \icmlauthor{Wuyang Zhou}{yyy}
    \icmlauthor{Yuxuan Gu}{yyy}
    \icmlauthor{Giorgos Iacovides}{yyy}
    \icmlauthor{Danilo Mandic}{yyy}
  \end{icmlauthorlist}

  \icmlaffiliation{yyy}{Department of Electrical and Electronic Engineering, Imperial College London, London, United Kingdom}

  \icmlcorrespondingauthor{Wuyang Zhou}{wuyang.zhou19@imperial.ac.uk}

  \icmlkeywords{Hyper-Connection, Foundation Model, Tensor Network}

  \vskip 0.3in
]

\printAffiliationsAndNotice{}

\begin{abstract}
The success of Hyper-Connections (HC) in neural networks (NN) has also highlighted issues related to training instability and restricted scalability. The Manifold-Constrained Hyper-Connections (mHC) mitigate these challenges by projecting the residual connection space onto a Birkhoff polytope, however, it faces two issues: 1) its iterative Sinkhorn-Knopp (SK) algorithm does not always yield exactly doubly stochastic residual matrices; 2) mHC incurs a prohibitive $\mathcal{O}(n^3C)$ parameter complexity with $n$ as the width of the residual stream and $C$ as the feature dimension. The recently proposed mHC-lite reparametrizes the residual matrix via the Birkhoff-von-Neumann theorem to guarantee double stochasticity, but also faces a factorial explosion in its parameter complexity, $\mathcal{O} \left( nC \cdot n! \right)$. To address both challenges, we propose \textbf{KromHC}, which uses the \underline{Kro}necker products of smaller doubly stochastic matrices to parametrize the residual matrix in \underline{mHC}. By enforcing manifold constraints across the factor residual matrices along each mode of the tensorized residual stream, KromHC guarantees exact double stochasticity of the residual matrices while reducing parameter complexity to only $\mathcal{O}(n^2C)$. Experiments show that KromHC matches or even outperforms other state-of-the-art (SOTA) mHC variants, while requiring significantly fewer trainable parameters. The code is at \texttt{https://github.com/wz1119/KromHC}.

\end{abstract}

\begin{figure*}[!t]
    \centering
    \includegraphics[width=\textwidth]{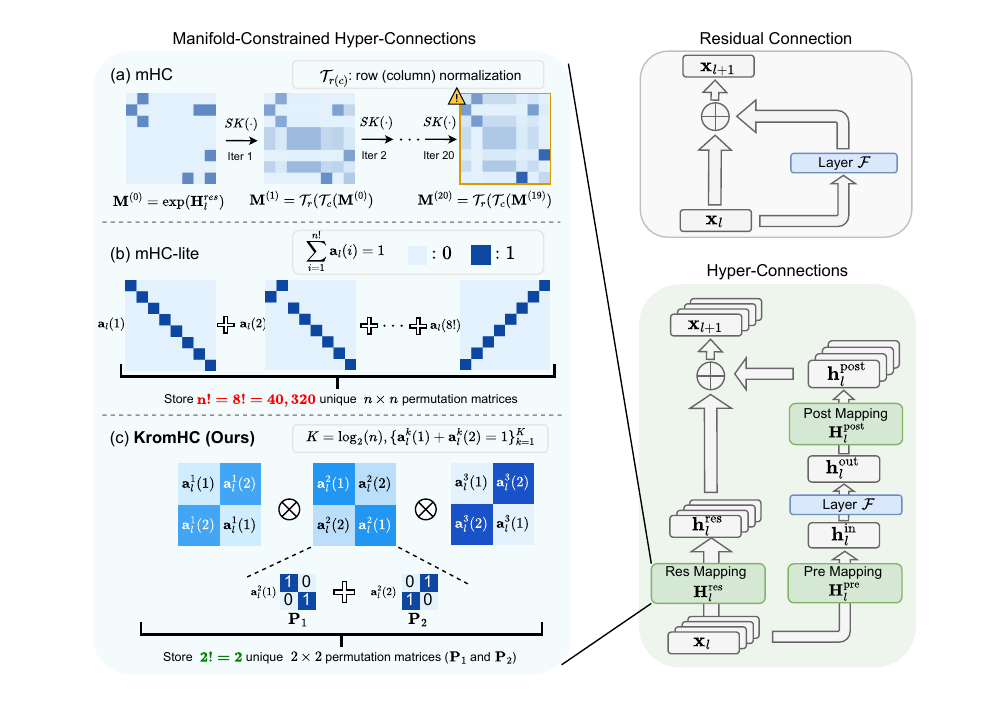}
    \caption{Illustration of variants of manifold-constrained hyper-connections with a residual stream width $n=8$. \textbf{(a) mHC}: utilizes iterative Sinkhorn-Knopp (SK) algorithm to approximate a doubly stochastic residual matrix; \textbf{(b) mHC-lite}: builds the residual matrix as convex combinations of $n!$ permutation matrices, but becomes infeasible for a large $n$; \textbf{(c) KromHC (\textbf{Ours})}: constructs the residual matrix as the Kronecker products of smaller (e.g., $2 \times 2$) doubly stochastic matrices, thus guaranteeing double stochasticity while remaining parameter efficient.}
    \label{fig:architecture}
\end{figure*}

\section{Introduction}

\begin{table}[t]
\small
\setlength{\tabcolsep}{4pt}
\centering
\caption{Comparisons between state-of-the-art (SOTA) mHC variants. Our proposed KromHC is the only method that simultaneously achieves \textit{exactly} doubly stochastic residual matrices, parameter efficiency, and requires no specialized kernel optimization.}
\label{tab:comparison}
\renewcommand{\arraystretch}{1.5}
\setlength{\tabcolsep}{8pt}

\begin{tabular}{|l|c|c|c|}
\hline
\rowcolor{black!5}
& & & \cellcolor{okgreen!15} \textbf{KromHC} \\ 

\rowcolor{black!5}
\multirow{-2}{*}{Criterion} 
& \multirow{-2}{*}{mHC}
& \multirow{-2}{*}{mHC-lite}
& \cellcolor{okgreen!15} \textbf{(Ours)} \\
\hline

Doubly Stochastic
& \textcolor{orange}{\faExclamationTriangle} 
& \textcolor{okgreen}{\faCheck} 
& \cellcolor{okgreen!15}\textcolor{okgreen}{\faCheck} \\
\hline

Parameter Efficient
& \textcolor{orange}{\faExclamationTriangle} 
& \textcolor{red}{\faTimes} 
& \cellcolor{okgreen!15}\textcolor{okgreen}{\faCheck} \\
\hline

PyTorch Native
& \textcolor{red}{\faTimes} 
& \textcolor{okgreen}{\faCheck} 
& \cellcolor{okgreen!15}\textcolor{okgreen}{\faCheck} \\
\hline
\end{tabular}
\end{table}

Hyper-Connections (HC) \cite{HC} have emerged as a powerful alternative to the ubiquitous residual connections \cite{he2016deep}. This is achieved by expanding the residual stream width to enhance topological complexity. Unlike the standard residual mapping $\mathbf{x}_{l+1} = \mathbf{x}_{l} + \mathcal{F}(\mathbf{x}_{l})$ \cite{he2016deep}, HC increases the stream width by an expansion rate, $n$, without incurring computational overhead regarding FLOPs. By introducing learnable mixing across the multiple residual streams, HC allows for more expressive feature propagation.  A single layer of HC is defined as
\begin{equation}
\mathbf{X}_{l+1} = \mathbf{H}_l^{\text{res}} \mathbf{X}_l + {\mathbf{H}_l^{\text{post}}}^\top  \mathcal{F} \left(\mathbf{H}_l^{\text{pre}} \mathbf{X}_l \right)   ,
\label{equ:HC_definition}
\end{equation}
where $\mathbf{X}_l \in \mathbb{R}^{n\times C}$ and $\mathbf{X}_{l+1}  \in \mathbb{R}^{n\times C}$ are the expanded input and output at the $l$-th HC layer; $\mathbf{H}_l^{\text{res}} \in \mathbb{R}^{n \times n}$, $\mathbf{H}_l^{\text{pre}} \in \mathbb{R}^{1 \times n}$, and $\mathbf{H}_l^{\text{post}} \in \mathbb{R}^{1 \times n}$ are learnable mappings that, respectively, mix the residual streams, aggregate features from $n$ streams into one, and map the layer output back onto the $n$ streams; $\mathcal{F}(\cdot)$ is a learned residual function such as the attention mechanism \cite{vaswani2017attention}.

Recent work \cite{mHC} has suggested that the unconstrained residual matrices ($\{\mathbf{H}_l^{\text{res}}\}_{l=1}^L$) in HC can lead to numerical instabilities when training large-scale neural networks (NNs) such as large language models (LLMs). In  particular, HC cannot preserve the identity mapping property of the standard residual connections \cite{he2016deep} when stacked across multiple layers, as  $\prod_{i=1}^{L-l} \mathbf{H}^{\mathrm{res}}_{L-i}$ fails to preserve the global mean of the features in 
\begin{equation}
\begin{aligned}
\mathbf{X}_L
=&
\left(
\prod_{i=1}^{L-l} \mathbf{H}^{\mathrm{res}}_{L-i}
\right)\mathbf{X}_l
\\
& +
\sum_{i=l}^{L-1}
\left(
\prod_{j=1}^{L-1-i} \mathbf{H}^{\mathrm{res}}_{L-j}
\right)
{\mathbf{H}^{\mathrm{post}}_i}^{\top}
\, \mathcal{F}\!\left(
\mathbf{H}^{\mathrm{pre}}_i \mathbf{X}_i
\right),
\end{aligned}
\label{equ:recursive_HC}
\end{equation}
where $L$ and $l$ represent a deeper and a shallower layer, respectively \cite{mHC}.

To address the training instability issue of HC, authors in \citet{mHC} have proposed the \textit{Manifold-Constrained Hyper-Connections}, which applies the Sinkhorn-Knopp algorithm \cite{sk_algo} to iteratively project the residual matrices, $\{\mathbf{H}_l^{\text{res}}\}_{l=1}^L$, onto the Birkhoff polytope (i.e. the set of doubly stochastic matrices). Since the sum of the individual rows and columns of doubly stochastic matrices is always equal to $1$, the residual mixing mapping, $\mathbf{H}_l^{\text{res}} \mathbf{X}_l$, becomes a convex combination of the input features, preserving feature mean across layers and regularizing the norm of the residual matrices.

However, the Sinkhorn-Knopp algorithm \cite{sk_algo} in mHC can fail to achieve double stochasticity when employed over a finite number of iterations (e.g., $20$ iterations in mHC). This leads to error accumulation across layers and undermines training stability \cite{mHC, mhc-lite} (See Figure \ref{fig:deviation_plot}). To this end, \citet{mhc-lite} proposed mHC-lite, which guarantees exact double stochasticity by using the \textit{Birkhoff-von-Neumann theorem} \cite{birkhoff-von-neumann} to parametrize the residual matrices as the convex combinations of $n\times n$ permutation matrices.

Despite achieving exactly doubly stochastic residual matrices, mHC-lite suffers from an explosion in parameter complexity, as it requires $n!$ unique permutation matrices of size $n \times n$ to be stored. Furthermore, the generic mHC \cite{mHC}  has a parameter complexity of $\mathcal{O}(n^3C)$, thus preventing effective scaling of the residual stream width $n$ (See Figure \ref{fig:paramvsn}). Therefore, the following question naturally arises:

\textit{Can we achieve exact double stochasticity of the residual matrices without incurring an explosion in parameter count as the width of the residual stream, $n$, increases?}

\begin{figure}[ht]
  \begin{center}
    \centerline{\includegraphics[width=1\columnwidth]{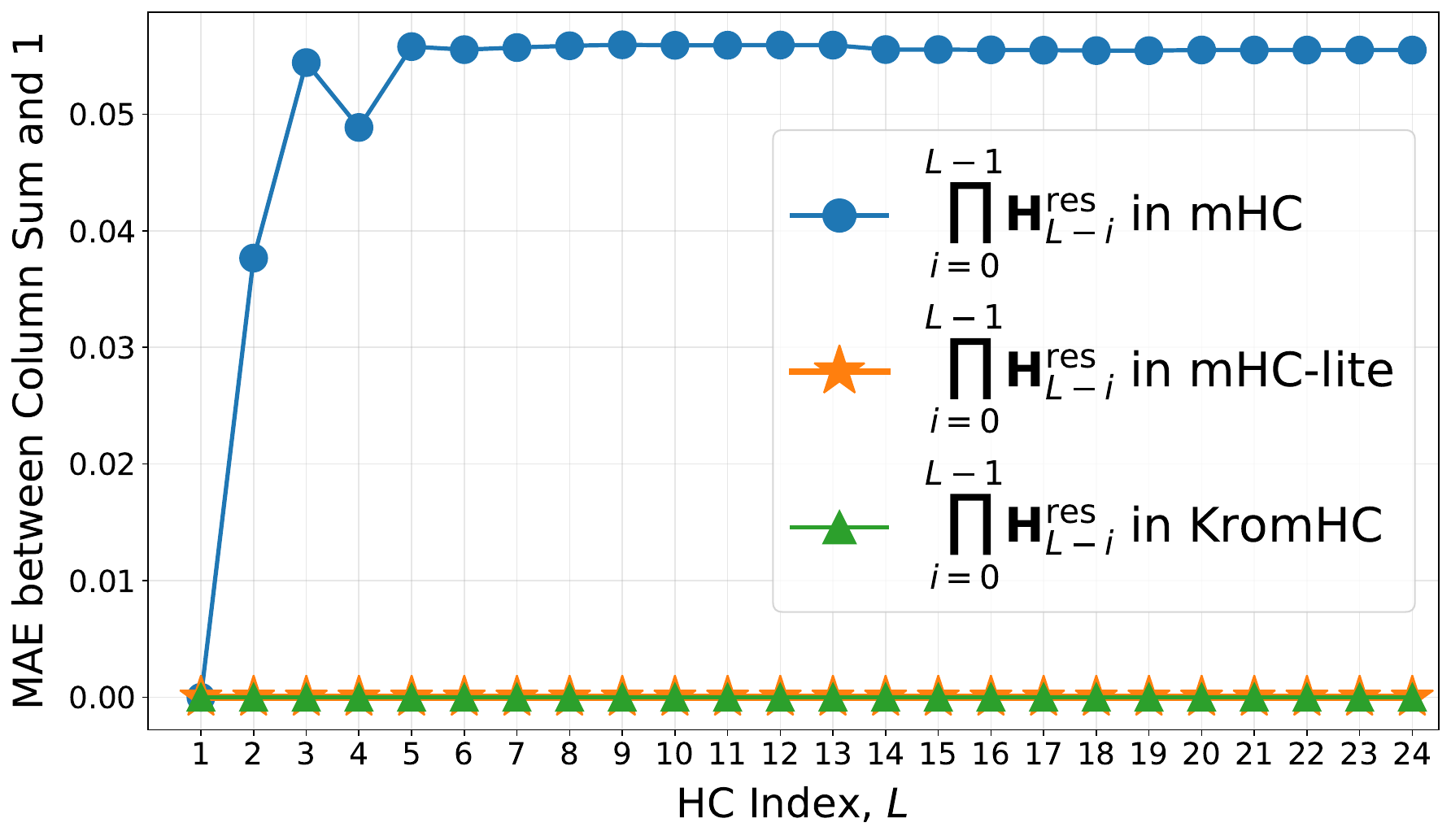}}
    \caption{Numerical stability analysis of the products of residual matrices. The plot compares the Mean Absolute Error (MAE) between the column sum of $\prod_{i=0}^{L-1} \mathbf{H}^{\mathrm{res}}_{L-i}$ and $1$ in an LLM with $D=12$ transformer blocks and $L=24$ layers of HC. The standard mHC architecture exhibits a MAE of around $0.05$, indicating potential training instabilities. The mHC-lite and KromHC have exactly doubly stochastic matrices, thus yielding zero MAE.}
    \label{fig:deviation_plot}
  \end{center}
\end{figure}
\begin{figure}[ht]
  \begin{center}
    \centerline{\includegraphics[width=1\columnwidth]{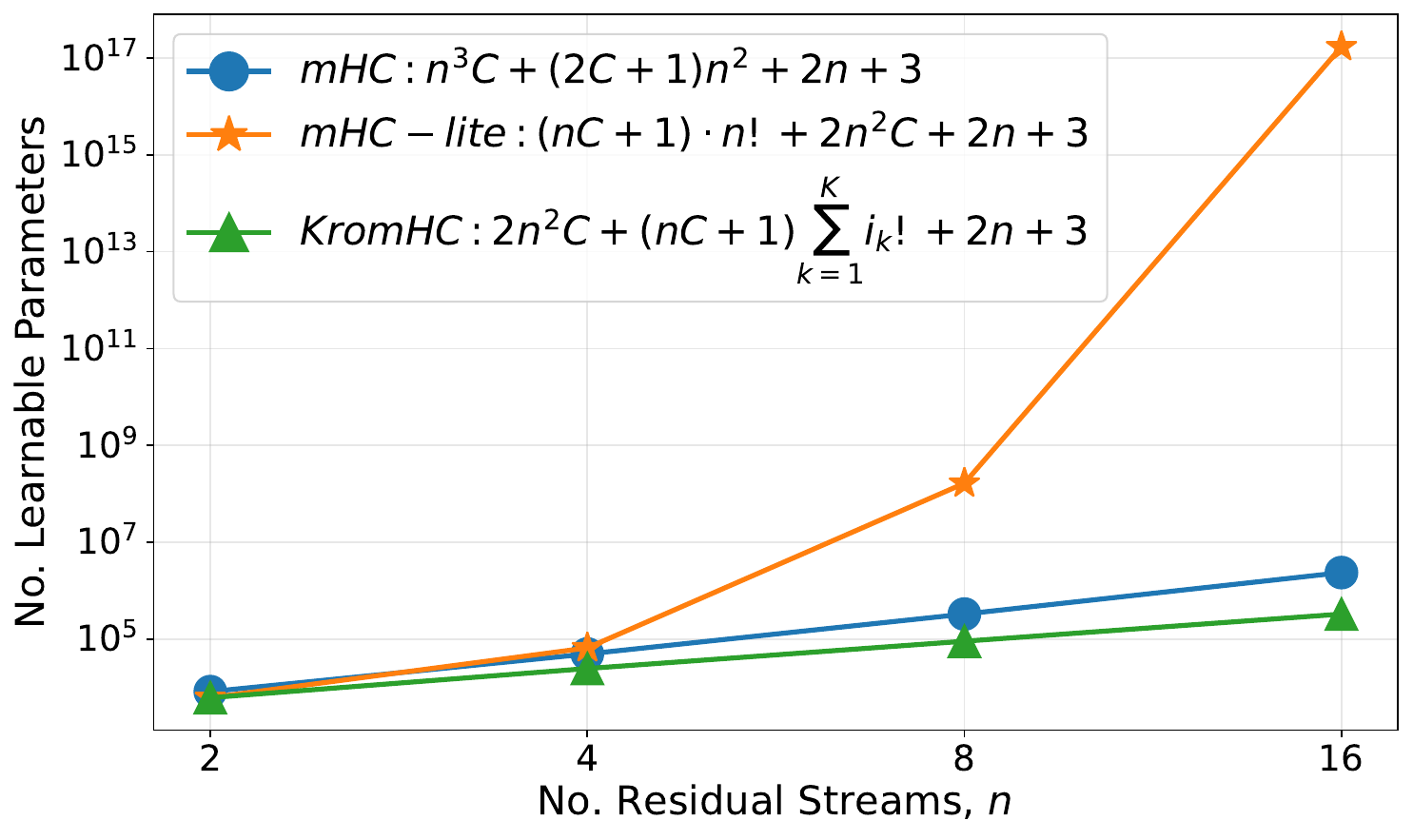}}
    \caption{The number of learnable parameters against the number of residual streams, $n$, per hyper-connection in mHC, mHC-lite, and KromHC. We assume the feature dimension, $C$, to be 512. Also, $n$ is factored into $\prod_{m=1}^{\log_2(n)} 2$, i.e., $i_1 = i_2=\cdots=i_K=2$.}
    \label{fig:paramvsn}
  \end{center}
\end{figure}
To answer this question, we propose \textbf{KromHC}, which uses the \underline{Kro}necker products \cite{van2000ubiquitous} of smaller doubly stochastic matrices to parametrize the residual matrix in \underline{mHC}. By framing residual mixing as a Tucker-structured tensor network \cite{Tucker, kolda2009tensor, 7038247} with a core tensor comprising of the tensorized residual streams, we induce a Kronecker structure that guarantees exact double stochasticity of the residual matrices, while having a parameter complexity of $\mathcal{O}\left(n^2C\right)$. More specifically, KromHC parametrizes the residual matrices, $\{\mathbf{H}_l^{\text{res}}\}_{l=1}^L$, as Kronecker products of smaller doubly stochastic matrices, which are learned as convex combinations of smaller permutation matrices as shown in Figure \ref{fig:architecture}. A qualitative comparison between mHC, mHC-lite and the proposed KromHC is shown in Table \ref{tab:comparison}.

In summary, the contributions of this paper are as follows:
\begin{itemize}
    \item Based on the Kronecker product, we propose KromHC, a novel manifold-constrained hyper-connection framework, providing a link to tensorized residual mixing.

    \item We resolve the conflict between exact double stochasticity and parameter efficiency in SOTA mHC variants. The proposed KromHC guarantees exactly doubly stochastic residual matrices while enabling more parameter-efficient scaling of the residual stream width.

    \item We demonstrate the effectiveness and scalability of our approach through extensive experiments on LLM pretraining, achieving consistent improvements over SOTA mHC variants without requiring customized kernels.
\end{itemize}

\section{Related Works}
\paragraph{Macro-design of Neural Architecture.} Macro-design concerns the topological structure of blocks in a NN, deciding how inputs and outputs of different blocks are routed and merged across layers \cite{srivastava2015training}. Despite the success of ResNet \cite{he2016deep}, the use of a single residual stream restricts information flow to a single pathway, which may limit the representational capacity of very deep networks \cite{HC}. To this end, recent research has focused on expanding the width of the residual stream \cite{chai2020highway, fang2023cross, mak2025residual, xiao2025muddformer, xie2023residual, HC}. For example,  Hyper-Connections expands the residual stream into multiple streams and introduces learnable matrices to dynamically mix streams as in Equation \eqref{equ:HC_definition} \cite{HC}. However, these methods \cite{xiao2025muddformer, mak2025residual, HC} may not preserve the identity mapping property of the original residual connection, causing instabilities during training.  \par

\noindent \textbf{Manifold-Constrained Hyper-Connections.}  Based on the original HC \cite{HC}, DeepSeek recently proposed the Manifold-Constrained Hyper-Connections \cite{mHC}. The mHC preserves the identity mapping property of the standard residual connection by projecting the residual matrices, $\{\mathbf{H}_l^{\text{res}}\in \mathbb{R}^{n \times n}\}_{l=1}^L$, onto a specific manifold, known as the Birkhoff polytope, $\mathcal{B}_n$. These matrices, $\mathbf{H}_l^{\text{res}} $, are doubly stochastic matrices, which have the following properties
\begin{equation}
\mathbf{H}_l^{\text{res}}{}^{\top} \mathbf{1}_n = 
\mathbf{H}_l^{\text{res}}\mathbf{1}_n= \mathbf{1}_n, \mathbf{H}_l^{\text{res}} \geqslant 0,
\end{equation}
where $\mathbf{1}_n$ represents an $n$-dimensional vector of all ones, and $\mathbf{H}_l^{\text{res}} \geqslant 0$ means that all entries in $\mathbf{H}_l^{\text{res}}$ are non-negative.
Since doubly stochastic matrices have spectral norms equal to $1$, and the set is closed under matrix multiplication \cite{birkhoff-von-neumann}, this manifold restores the identity mapping property across layers. 

Given the input hidden matrix $\mathbf{X}_l \in \mathbb{R}^{n \times C}$ at the $l$-th layer, it is first flattened into a vector $\mathbf{x}_l = \mathrm{vec}(\mathbf{X}_l) \in \mathbb{R}^{1 \times nC}$ to preserve full context information. Then, the learnable residual mappings in mHC are obtained as
\begin{equation}
\left\{
\begin{aligned}
\mathbf{x}'_l &= \mathrm{RMSNorm}(\mathbf{x}_l), \\
\mathbf{H}^{\text{pre}}_l &= \sigma\left(\alpha^{\text{pre}}_l \mathbf{x}'_l \mathbf{W}_l^{\text{pre}}\
+ \mathbf{b}^{\text{pre}}_l\right), \\
\mathbf{H}^{\text{post}}_l &= 2\sigma\left(\alpha^{\text{post}}_l \mathbf{x}'_l \mathbf{W}_l^{\text{post}}
+ \mathbf{b}^{\text{post}}_l\right), \\
\mathbf{H}^{\text{res}}_l &= \mathrm{SK}\left(\alpha^{\text{res}}_l \cdot
\mathrm{mat}\left(\mathbf{x}'_l \mathbf{W}_l^{\text{res}} 
+ \mathbf{b}^{\text{res}}_l \right)\right),
\end{aligned}
\right.
\end{equation}
where $\mathbf{W}_l^{\text{pre}}, \mathbf{W}_l^{\text{post}} \in \mathbb{R}^{nC \times n}$
and $\mathbf{W}_l^{\text{res}} \in \mathbb{R}^{nC \times n^2}$ are learnable projection matrices; $\mathbf{b}^{\text{pre}}_l,\mathbf{b}^{\text{post}}_l \in \mathbb{R}^{1 \times n}$ and $\mathbf{b}^{\text{res}}_l \in \mathbb{R}^{1 \times n^2}$ are learnable bias terms; the terms $\alpha^{\text{pre}}_l$, $\alpha^{\text{post}}_l$, and $\alpha^{\text{res}}_l$ are learnable scalars, $\mathrm{mat}(\cdot)$ is a reshape function from $\mathbb{R}^{1 \times n^2}$ to $\mathbb{R}^{n \times n}$, and $\sigma(\cdot)$ denotes the Sigmoid function. The SK$(\cdot)$ operator denotes $20$ iterations of the Sinkhorn-Knopp algorithm \cite{sk_algo} for projecting the residual matrix onto the Birkhoff polytope. However, mHC does not guarantee double stochasticity and requires customized kernels for accelerating the SK algorithm.

\citet{mhc-lite} proposed mHC-lite to parameterize the doubly stochastic residual matrices as convex combinations of permutation matrices via the Birkhoff-von-Neumann theorem \cite{birkhoff-von-neumann} (See Appendix \ref{sec:mHC_lite_parameterization}). It guarantees exact double stochasticity and can be implemented with PyTorch native matrix operations \cite{paszke2019pytorch}. However, the parameter complexity of mHC-lite grows factorially, i.e., $\mathcal{O}(nC\cdot n!)$ with the residual stream width $n$, preventing the scaling of $n$ (See Figure \ref{fig:paramvsn}).

\paragraph{Tensor Networks.} 
Tensor Networks (TNs) provide an efficient representation of higher-order tensors by factorizing them into a network of lower-order cores and factors, thereby alleviating the ``curse of dimensionality" \cite{novikov2015tensorizing, kolda2009tensor, cichocki2016tensor, wang2023tensor}. By exploiting the multi-linear and low-rank structures in NNs, TNs enable expressive yet parameter-efficient representations that can scale efficiently. Recent works have demonstrated their effectiveness in LLM applications such as model compression \cite{xu2023tensorgpt, 11228585}, parameter-efficient fine-tuning \cite{bershatsky2024lotr, yang2024loretta, gu2025tera}, etc.

\section{Notation and Preliminaries}
The mathematical notations used in this paper are listed in Table \ref{tab:math_notation}. This is consistent with the notation used in \citet{7038247}.

\begin{table}[htbp]
\caption{Mathematical notations}
\footnotesize
  \centering
  \begin{tabular}{cc}
    \toprule
      Symbol   & Meaning \\
    \midrule
    $a$, $\mathbf{a}$, $\mathbf{A}$, $\mathcal{A}$    & Scalar, vector, matrix, tensor  \\
    $(\cdot)^\top$ & Matrix transpose \\
    $\mathcal{A}(i_1, \dots, i_N)$ & The $(i_1, \dots, i_N)$-th element of $\mathcal{A}$ \\
    $\mathbf{I}_{C \times C}$ & Identity matrix of size $C \times C$\\
    $\mathcal{A} \times_n \mathbf{B}$ & Mode-$n$ product \\
    $\mathbf{A} \otimes \mathbf{B}$ & Kronecker product\\
    $a!$ & $a$ factorial\\
    $\text{vec}( \mathcal{A} )$ & Vectorization of $\mathcal{A}$ \\
    $\text{mat}( \mathcal{A} )$ & Matricization of $\mathcal{A}$ \\
    \bottomrule
  \end{tabular}
  \label{tab:math_notation}
\end{table}

 An order-$K$ tensor, $\mathcal{X}\in \mathbb{R}^{i_1 \times i_2 \times \cdots \times i_K}$, is a multi-dimensional array with $K$ modes. A vector is an order-$1$ tensor, and a matrix is an order-$2$ tensor. Tensorization (folding) reshapes a vector or a matrix into a higher-order tensor. For example, we can tensorize a matrix $\mathbf{A} \in \mathbb{R}^{j_1 \times j_2}$ into an order-$K$ tensor $\mathcal{A} \in \mathbb{R}^{i_1 \times \cdots \times i_K}$, provided that $\prod_{m=1}^{k} i_m = j_1$ and $\prod_{m=k+1}^{K} i_m = j_2$ for some split point $k \in \{1,\dots,K\}$. The inverse process of tensorization is called unfolding (matricization or vectorization).

\paragraph{Tucker Decomposition Tensor Network.} Tucker decomposition is a cornerstone in multi-linear tensor networks, as it generalizes the matrix singular value decomposition (SVD) to higher-order tensors \cite{kolda2009tensor, de2000multilinear}. More specifically, Tucker decomposition tensor network parametrizes an order-$K$ tensor, $\mathcal{Y} \in \mathbb{R}^{i_1 \times i_2 \times \cdots \times i_K}$, as 
\begin{equation}
    \mathcal{Y} = \mathcal{X} \times_1 \mathbf{U}^{1} \times_2 \mathbf{U}^{2} \times_3 \cdots \times_K \mathbf{U}^{K},
\end{equation}
or in unfolded form
\begin{equation}
\begin{aligned}
\text{vec}(\mathcal{Y}) &= \left( \mathbf{U}^{K} \otimes \mathbf{U}^{K-1} \otimes \cdots \otimes \mathbf{U}^{1} \right) \text{ vec}(\mathcal{X}) \\
&= \bigotimes_{k=K}^{1}  \mathbf{U}^{k} \text{ vec}(\mathcal{X}),
\end{aligned}
\end{equation}
where $\mathcal{X} \in \mathbb{R}^{r_1 \times r_2 \times \cdots \times r_K}$ is an order-$K$ core tensor, $\{\mathbf{U}^{k}  \in \mathbb{R}^{i_k \times r_k}\}_{k=1}^K$ are the $K$ factor matrices, and $\text{vec}(\cdot)$ is the operation that converts a tensor from $\mathbb{R}^{i_1 \times i_2 \times \cdots \times i_K}$ to a vector $\in \mathbb{R}^{i_1 \cdot i_2 \cdots i_K}$. The vector containing $[r_1, r_2, \ldots, r_K]$ is the so-called Tucker ranks. The element-wise definition of the mode-$n$ product in $\mathcal{Y} = \mathcal{X} \times_k \mathbf{U}^k$ is
\begin{equation}
\begin{aligned}
    &\mathcal{Y}(r_1,\cdots, r_{k-1}, i_k, r_{k+1}, \cdots,r_K)= \\
    & \quad \sum_{r=1}^{r_k} \mathcal{X}(r_1,\cdots,r_{k-1}, r,r_{k+1},\cdots,r_K) \mathbf{U}^k(i_k, r).
\end{aligned}
\end{equation}

\section{Methodology}
KromHC keeps the parametrization of $\mathbf{H}_l^{\text{post}}$ and $\mathbf{H}_l^{\text{pre}}$ unchanged from mHC, and parametrizes the residual mixing mapping of Equation (\ref{equ:HC_definition}), $\mathbf{H}_l^{\text{res}} \mathbf{X}_l$, as a Tucker decomposition tensor network where the tensorized residual stream is the core tensor. The proposed KromHC guarantees that all residual matrices are always exact doubly stochastic, while having a learnable parameter count much lower than mHC and mHC-lite. The architecture of the proposed KromHC is illustrated in Figure \ref{fig:architecture}.

\subsection{Tensorizing the Residual Stream}
Let $\mathbf{x}_l \in \mathbb{R}^C$ be the original input feature at the $l$-th layer. We expand the width of the residual stream into $n$, yielding $\mathbf{X}_l \in \mathbb{R}^{n \times C}$ at the $l$-th layer. 
Given $n = \prod_{k=1}^K i_k, i_k\in\mathbb{Z}^+$, we first tensorize the residual stream into an order-$(K+1)$ tensor, $\mathcal{X}_l \in \mathbb{R}^{i_1 \times i_2 \times \dots \times i_K \times C}$ (See Figure \ref{fig:KromHC_TN} in Appendix). Afterwards, we perform residual mixing along each of the first $K$ modes of $\mathcal{X}_l$ with the learned doubly stochastic matrices, $\{\mathbf{U}^{k}_l  \in \mathbb{R}^{i_k \times i_k} \}_{k=1}^K$, which satisfy
\begin{equation}
\mathbf{U}^{k}_l\mathbf{1}_{i_k} = {\mathbf{U}^{k}_l}^{\top} \mathbf{1}_{i_k}= \mathbf{1}_{i_k}, \mathbf{U}^{k}_l \geqslant 0, \text{ for } 1\leqslant k \leqslant K.
\end{equation}
This is achieved as
\begin{equation}
\begin{aligned}
\mathbf{H}_l^{\text{res}} \mathbf{X}_l = \text{ mat} &( \mathcal{X}_l \times_1 \mathbf{U}_{l}^1 \times_2 \mathbf{U}_{l}^2 \times_3\\
&\quad \dots \times_K \mathbf{U}_{l}^K \times_{K+1} \mathbf{I}_{C \times C}) ,
\end{aligned}
\label{equ:KromHC_residual_mixing}
\end{equation}
where $\text{mat}(\cdot)$ represents the matricization of a tensor from $\mathbb{R}^{i_1 \times i_2 \times \dots \times i_K \times C}$ to $\mathbb{R}^{n \times C}$. This coincides with the definition of the Tucker decomposition tensor network where the Tucker ranks are equal to the original dimensions, i.e. $[r_1, r_2, \ldots, r_K, r_{K+1}] = [i_1, i_2, \ldots, i_K, C]$, and the last factor matrix, $\mathbf{U}^{K+1}_l \in \mathbb{R}^{C \times C}$, is the identity matrix. Therefore, we can write Equation (\ref{equ:KromHC_residual_mixing}) in the following format

\begin{equation}
\mathbf{H}_l^{\text{res}} \mathbf{X}_l = \underbrace{ \left( \mathbf{U}_{l}^K \otimes \mathbf{U}_{l}^{K-1} \otimes \cdots \otimes \mathbf{U}_{l}^1 \right) }_{\mathbf{H}_l^{\text{res}}}  \mathbf{X}_l,
\end{equation}

where $\otimes$ denotes the Kronecker product.

Consequently, the single layer propagation in KromHC can be written as
\begin{equation}
\begin{split}
\mathbf{X}_{l+1} & = \mathbf{H}_l^{\text{res}} \mathbf{X}_l + {\mathbf{H}_l^{\text{post}}}^\top  \mathcal{F} \left( \mathbf{H}_l^{\text{pre}} \mathbf{X}_l \right) \\
&= \bigotimes_{k=K}^{1} \mathbf{U}_{l}^{k} \mathbf{X}_l+ {\mathbf{H}_l^{\text{post}}}^\top  \mathcal{F} \left( \mathbf{H}_l^{\text{pre}} \mathbf{X}_l \right),
\end{split}
\end{equation}
where $\mathcal{F}(\cdot)$ denotes a neural network layer which could be an attention mechanism, a feed-forward network (FFN), etc.

\subsection{Kronecker-Product Residual Matrices}
We detail below how to guarantee the double stochasticity of the so obtained $\mathbf{H}^{\text{res}}_l$ from the Kronecker product of smaller doubly stochastic matrices, $\mathbf{U}^{k}_l  \in \mathbb{R}^{i_k \times i_k}$.

\begin{theorem} 
(\textbf{Birkhoff-von-Neumann Theorem} \cite{birkhoff-von-neumann}) For any $n \times n$ doubly stochastic matrix, $\mathbf{X}$, there exists a finite collection of permutation matrices $\{\mathbf{P}_k \in \mathbb{R}^{n\times n}\}_{k=1}^{n!}$ and a coefficient vector
$\mathbf{a} = (a_1,\ldots,a_{n!}) \in \mathbb{R}^{n!}$ satisfying
$a_k \geqslant 0, \forall k \in [n!]$ and $\sum_{k=1}^{n!} a_k = 1$, such that $\mathbf{X}= \sum_{k=1}^{n!} a_k \, \mathbf{P}_k$.
\label{thm:birkhoff-von-neumann}
\end{theorem}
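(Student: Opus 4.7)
The plan is to prove the theorem by classical convex geometry: identify $\mathcal{B}_n$, the set of $n \times n$ doubly stochastic matrices, as a compact convex polytope, show its extreme points coincide with the $n!$ permutation matrices, and then invoke the Minkowski--Carath\'eodory representation of a polytope point as a convex combination of its vertices. First I would observe that $\mathcal{B}_n \subset \mathbb{R}^{n \times n}$ is cut out by the $2n$ linear row- and column-sum equations together with the non-negativity constraints $X_{ij} \geq 0$, and is bounded since every entry lies in $[0,1]$, hence $\mathcal{B}_n$ is a non-empty compact convex polytope.

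The easy direction is that every permutation matrix $\mathbf{P}_k$ is extreme in $\mathcal{B}_n$: if $\mathbf{P}_k = \tfrac{1}{2}(\mathbf{A} + \mathbf{B})$ with $\mathbf{A}, \mathbf{B} \in \mathcal{B}_n$, then wherever $\mathbf{P}_k$ has a zero entry both $\mathbf{A}$ and $\mathbf{B}$ must be zero by non-negativity, and matching row sums then forces $\mathbf{A} = \mathbf{B} = \mathbf{P}_k$.

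The main obstacle, and the combinatorial heart of the argument, is the converse: every extreme point of $\mathcal{B}_n$ is a permutation matrix. I would prove this by contrapositive. Suppose $\mathbf{X} \in \mathcal{B}_n$ is not a permutation matrix, so some entry $X_{ij} \in (0,1)$. Form the bipartite \emph{support graph} $G$ on vertex set (rows) $\cup$ (columns), with an edge for each fractional entry $X_{ij} \in (0,1)$. Because each row of $\mathbf{X}$ sums to $1$, a single fractional entry in a row forces at least one more fractional entry in that row, so every non-isolated row vertex has degree $\geq 2$; the same holds for columns. A finite graph with minimum degree $\geq 2$ on its non-isolated vertices contains a cycle, and bipartiteness forces the cycle to have even length $2m$. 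Labeling consecutive edges of this cycle alternately by $+\varepsilon$ and $-\varepsilon$ produces a signed matrix $\mathbf{E}$ supported on the cycle whose row and column sums vanish. For $\varepsilon > 0$ small enough that all fractional entries of $\mathbf{X} \pm \varepsilon \mathbf{E}$ remain in $[0,1]$, both matrices lie in $\mathcal{B}_n$ and their midpoint is $\mathbf{X}$, contradicting extremality.

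Having characterized the extreme points as exactly the $n!$ permutation matrices, Minkowski's theorem applied to the compact convex polytope $\mathcal{B}_n$ yields the desired decomposition $\mathbf{X} = \sum_{k=1}^{n!} a_k \mathbf{P}_k$ with $a_k \geq 0$ and $\sum_k a_k = 1$. Carath\'eodory's theorem would further bound the number of non-zero coefficients by $\dim(\mathcal{B}_n) + 1 = (n-1)^2 + 1$, but the weaker bound $n!$ in the statement follows directly from the vertex enumeration. The cycle-detection step in the bipartite support graph is the real work; everything else is standard finite-dimensional convex analysis.
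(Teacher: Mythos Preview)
Your proof is correct and follows the classical route: establish $\mathcal{B}_n$ as a compact convex polytope, characterize its extreme points via the bipartite cycle argument, and conclude by Minkowski. However, the paper does not actually prove this theorem at all---it is stated with a citation to the original Birkhoff--von Neumann result and used as a black box. There is therefore no ``paper's own proof'' to compare against; you have supplied a full argument where the authors simply invoked the literature.
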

Since the size of doubly stochastic matrices, $\mathbf{U}^{k}_l  \in \mathbb{R}^{i_k \times i_k}$, are typically much smaller than $n \times n$, we can parametrize them as convex combinations of  permutation matrices of shape $i_k \times i_k$ via Theorem \ref{thm:birkhoff-von-neumann}. For example, let the width of the residual stream, $n$, be a power of $2$ (i.e., $2, 4, 8, 16, \ldots$) and $\{i_k=2\}_{k=1}^K$, where $K=\log_{i_k=2}(n)$. In this case, only $2$ permutation matrices of size $2 \times 2$ need to be stored for parametrizing all $K$ different $\mathbf{U}^{k}_l  \in \mathbb{R}^{i_k \times i_k}$. Furthermore, we only need to learn $2$ scalars in order to represent any $\mathbf{U}^{k}_l  \in \mathbb{R}^{i_k \times i_k}$ on the Birkhoff polytope as the convex combination of two $2\times 2$ permutation matrices. 
\begin{theorem}\label{theo:kroneck_doubly_stoch_main}
(\textbf{Kronecker Closure of Doubly Stochastic Matrices})  Let $\mathcal{B}_n \subset \mathbb{R}^{n \times n}$ denote the set of $n \times n$ doubly stochastic matrices. Let $\mathbf{U}_l^1 \in \mathcal{B}_{i_1}$ and $\mathbf{U}_l^2 \in \mathcal{B}_{i_2}$. Then their Kronecker product satisfies
\begin{equation}
\mathbf{U}_l^1 \otimes \mathbf{U}_l^2 \in \mathcal{B}_{i_1i_2}.
\end{equation}
More generally, for any finite collection $\{\mathbf{U}_k \in \mathcal{B}_{i_k}\}_{k=1}^K$, their iterated Kronecker product satisfies 
\begin{equation}
\bigotimes_{k=K}^1 \mathbf{U}_k \in \mathcal{B}_n, \text{ where } n = \prod_{k=1}^K i_k.
\end{equation}
\label{thm:kronecker-product}
\end{theorem}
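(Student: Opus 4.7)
The plan is to verify directly each of the three defining properties of a doubly stochastic matrix for the Kronecker product $\mathbf{U}_l^1 \otimes \mathbf{U}_l^2$, and then promote the base case to the general $K$-factor statement by induction. The two algebraic tools I will lean on are the mixed-product property $(\mathbf{A}\otimes\mathbf{B})(\mathbf{C}\otimes\mathbf{D}) = (\mathbf{A}\mathbf{C})\otimes(\mathbf{B}\mathbf{D})$ and the transpose identity $(\mathbf{A}\otimes\mathbf{B})^\top = \mathbf{A}^\top\otimes\mathbf{B}^\top$, together with the elementary observation that $\mathbf{1}_{i_1 i_2} = \mathbf{1}_{i_1}\otimes\mathbf{1}_{i_2}$, which expresses the all-ones vector of the product space as the Kronecker product of the all-ones vectors of the factor spaces.

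First I would dispense with non-negativity: every entry of $\mathbf{U}_l^1 \otimes \mathbf{U}_l^2$ is a product $\mathbf{U}_l^1(a,b)\,\mathbf{U}_l^2(c,d)$ of two non-negative scalars, hence non-negative. Next I would verify the row-sum condition by combining the mixed-product property with the identity above:
\begin{equation}
(\mathbf{U}_l^1 \otimes \mathbf{U}_l^2)\,\mathbf{1}_{i_1 i_2}
= (\mathbf{U}_l^1 \otimes \mathbf{U}_l^2)(\mathbf{1}_{i_1}\otimes\mathbf{1}_{i_2})
= (\mathbf{U}_l^1\mathbf{1}_{i_1})\otimes(\mathbf{U}_l^2\mathbf{1}_{i_2})
= \mathbf{1}_{i_1}\otimes\mathbf{1}_{i_2}
= \mathbf{1}_{i_1 i_2},
\end{equation}
where the third equality uses that each factor is doubly stochastic. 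The column-sum condition follows by an identical chain of equalities applied to $(\mathbf{U}_l^1 \otimes \mathbf{U}_l^2)^\top = {\mathbf{U}_l^1}^\top\otimes{\mathbf{U}_l^2}^\top$. Together these three properties establish $\mathbf{U}_l^1 \otimes \mathbf{U}_l^2 \in \mathcal{B}_{i_1 i_2}$.

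For the general statement I would proceed by induction on $K$. The base case $K=2$ is exactly what was just shown. Assuming $\bigotimes_{k=K-1}^{1}\mathbf{U}_k \in \mathcal{B}_{n'}$ with $n' = \prod_{k=1}^{K-1} i_k$, associativity of the Kronecker product lets me write $\bigotimes_{k=K}^{1}\mathbf{U}_k = \mathbf{U}_K \otimes \bigl(\bigotimes_{k=K-1}^{1}\mathbf{U}_k\bigr)$, and applying the base case to this pair closes the induction.

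There is no real obstacle here; the argument is essentially a one-line application of the mixed-product property once the Kronecker factorization of $\mathbf{1}_n$ is noticed. The only place I would be careful is the ordering convention for the iterated Kronecker product $\bigotimes_{k=K}^{1}$, since the paper has already fixed this ordering to match the Tucker vectorization identity; I would make sure the induction step peels off the outermost factor on the left so that the statement is proved with exactly that convention rather than its reverse.
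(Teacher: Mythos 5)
Your proposal is correct and follows essentially the same route as the paper's own proof: entrywise non-negativity, the mixed-product identity applied to $\mathbf{1}_{i_1 i_2} = \mathbf{1}_{i_1}\otimes\mathbf{1}_{i_2}$ for the row and column sums, and induction for the $K$-factor case. Your explicit treatment of the induction step and the ordering convention is slightly more detailed than the paper's brief ``by induction'' remark, but the argument is the same.
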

\begin{proof}
    See Appendix \ref{sec:kroneck_doubly_stoch_main} or \citet{taranenko2023products}.
\end{proof}

Theorem \ref{thm:kronecker-product} states that the Kronecker product of any finite collection of doubly stochastic matrices is also doubly stochastic. Since $\{\mathbf{U}^{k}_l  \in \mathbb{R}^{i_k \times i_k}\}_{k=1}^K$ are doubly stochastic matrices and $\mathbf{H}^{\text{res}}_l=\bigotimes_{k=K}^{1} \mathbf{U}_{l}^{k}$, $\mathbf{H}^{\text{res}}_l$ in the proposed KromHC is guaranteed to be doubly stochastic. This is equivalent to imposing a Kronecker structure to the residual matrix, $\mathbf{H}^{\text{res}}_l$, which acts as an extra constraint besides the manifold constraint used in mHC.

\begin{remark}
Due to the guaranteed double stochasticity of the residual matrices, KromHC preserves the desired properties of the original mHC, such as \textit{norm preservation} and \textit{compositional closure}. Norm preservation means the spectral norm of the residual matrix is bounded by $1$ (i.e., $\| \mathbf{H}^{\text{res}}_l\| \leqslant 1$). Compositional closure preserves stability throughout all layers, as $\prod_{i=1}^{L}\mathbf{H}^{\text{res}}_{L-i}$ remains exact doubly stochastic.
    
\end{remark}

\begin{table*}[!ht]
\centering
\small
\setlength{\tabcolsep}{2.1pt}
\caption{Comparisons of additional learnable parameters relative to standard residual connections, training loss, validation bits-per-byte (BPB), and CORE score across different types of manifold-constrained hyper connections. The number of transformer blocks is denoted by $D$. Each transformer block has $2$ residual connections. All experiments are conducted with $n=4$ residual streams. The best and second best values among different methods are highlighted in \textbf{bold} and \underline{underlined}, respectively.}
\label{tab:loss}
\begin{tabular}{lcccccccc}
\toprule
Method 
& \multicolumn{4}{c}{$D=6$} 
& \multicolumn{4}{c}{$D=12$} \\
\cmidrule(lr){2-5} \cmidrule(lr){6-9}
& $\Delta$ Params (K) $\downarrow$
& Train Loss $\downarrow$ 
& Val BPB $\downarrow$ 
& CORE Score$\uparrow$
& $\Delta$ Params (K) $\downarrow$
& Train Loss $\downarrow$ 
& Val BPB $\downarrow$ 
& CORE Score$\uparrow$ \\
\midrule
Residual
& -- & 3.490 & 1.047 & 6.477
& -- & 2.971 & 0.864 & 14.774 \\
mHC
& 462 & 3.493 & \textbf{1.042} & 7.971
& 1844 & \textbf{2.964} & \textbf{0.861} & \underline{16.023} \\
mHC-lite
& 609 & \textbf{3.484} & \underline{1.045} & \underline{8.208}
& 2433 & 2.972 & 0.864 & 13.217 \\
\rowcolor{blue!10}
KromHC \textbf{(Ours)}
& \textbf{240} & \underline{3.488} & 1.047 & \textbf{9.018}
& \textbf{959} & \underline{2.966} & \underline{0.862} & \textbf{16.872} \\
\bottomrule
\end{tabular}
\end{table*}

\subsection{Parametrization of KromHC}
We detail below the parametrization of $\mathbf{H}^{\text{pre}}_l$, $\mathbf{H}^{\text{post}}_l$ and $\mathbf{H}^{\text{res}}_l$ in KromHC. We follow mHC to flatten the input, $\mathbf{X}_l \in \mathbb{R}^{n \times C}$, at the $l$-th layer to $\mathbf{x}_l \in \mathbb{R}^{1 \times nC}$. The parametrization of KromHC is as follows:
\begin{equation}
\left\{
\begin{aligned}
\mathbf{x}_l' &= \text{RMSNorm}(\mathbf{x}_l), \\
\mathbf{H}_l^{\text{pre}} &= \sigma \left( \alpha_l^{\text{pre}} \mathbf{x}_l' \mathbf{W}_l^{\text{pre}} + \mathbf{b}_l^{\text{pre}} \right), \\
\mathbf{H}_l^{\text{post}} &= 2 \sigma \left( \alpha_l^{\text{post}} \mathbf{x}_l' \mathbf{W}_l^{\text{post}} + \mathbf{b}_l^{\text{post}} \right), \\
\mathbf{a}_l^{k} &= \text{Softmax}(\alpha_l^{\text{res}} \mathbf{x}_l' \mathbf{W}_l^{\text{res}, k} + \mathbf{b}_l^{\text{res}, k}), \\
\mathbf{U}_{l}^{k} &= \sum_{m=1}^{i_k!} \mathbf{a}_{l}^{k}(m) \mathbf{P}_m, \\
\mathbf{H}_l^{\text{res}} &=  \bigotimes_{k=K}^{1} \mathbf{U}_{l}^{k},
\end{aligned}
\right.
\label{eq:kromhc_definition}
\end{equation}
where $n$ is factorized into $K$ terms, i.e., $n = \prod_{k=1}^{K} i_k, \  i_k \in \mathbb{Z}^+$. The term $\mathbf{P}_m \in \mathbb{R}^{n \times n}$ denotes the $m$-th unique permutation matrix. The scalar $\mathbf{a}_{l}^{k}(m)$ is the $m$-th entry in $\mathbf{a}_l^{k}$. The operator $\bigotimes_{k=K}^{1}$ denotes the sequence of Kronecker products. The $a_l^{\text{pre}}$, $a_l^{\text{post}} $, and $a_l^{\text{res}}$ are the learnable scalar coefficients at the $l$-th KromHC layer. The sigmoid function is denoted as $\sigma(\cdot)$. The $\mathbf{W}_l^{\text{pre}}\in \mathbb{R}^{nC\times n}$, $\mathbf{W}_l^{\text{post}}\in \mathbb{R}^{nC\times n}$, and $\mathbf{W}_l^{\text{res}, k}\in \mathbb{R}^{nC\times i_k!}$ are the learnable weight matrices. The $\mathbf{b}_l^{\text{pre}} \in \mathbb{R}^{1\times n}$, $\mathbf{b}_l^{\text{post}} \in \mathbb{R}^{1\times n}$, and $\mathbf{b}_l^{\text{res}, k} \in \mathbb{R}^{1\times i_k!}$ are the learnable biases.

\begin{remark}
Although the set of permutation matrices $\{\mathbf{P}_m\}_{m=1}^{i_k!}$ for each factor $\mathbf{U}_{l}^{k}$ is given by construction, the coefficients $\mathbf{a}_l^k$ are learned via Equation \eqref{eq:kromhc_definition}. Thus, each factor $\mathbf{U}_l^k$ is a learnable point within the Birkhoff polytope $\mathcal{B}_{i_k}$, allowing $\mathbf{H}_l^{\text{res}}$ to dynamically mix the residual streams while maintaining its doubly stochastic property.
\end{remark}

\begin{remark}
The dimensions $\{i_k\}_{k=1}^K$ can be any integer factorization of $n$ such that $i_k \geqslant 2$. Although any valid factorization preserves the doubly stochastic property of $\mathbf{H}_l^{\text{res}}$, choosing the prime factorization (e.g., $i_k=2$ for $n=2^K$) yields the highest parameter efficiency. This is because the number of learnable parameters in KromHC scales with $\sum i_k!$.
\end{remark}

\paragraph{Parameter Complexity Analysis.} The learnable parameter count of KromHC per HC layer is much lower than that of the mHC and mHC-lite (See Figure \ref{fig:paramvsn}). More specifically, the parameter complexity of mHC is $\mathcal{O}\left(n^3C\right)$, while for mHC-lite, the parameter complexity is $\mathcal{O}\left(nC \cdot n!\right)$. The parameter count of KromHC is $2n^2C +  (nC + 1) \sum_{k=1}^K i_k! + 2n +3$. Since $i_k$ is usually very small (e.g., $i_k = 2$), the parameter complexity of KromHC is dominated by $\mathcal{O}(n^2C)$.

\begin{table*}[!ht]
\centering
\small
\caption{Commonsense and reasoning benchmark results (accuracy \%). $D = 6 \text{ or } 12$ transformer blocks and $n=4$ residual streams were used for the experiments. The best and second best values among different methods are highlighted in \textbf{bold} and \underline{underlined}, respectively.}
\setlength{\tabcolsep}{4.5pt}
\begin{tabular}{clcccccccccccc}
\toprule
$D$ & Method
& Avg & HS & HS-ZS & PIQA & ARC-E & ARC-C & COPA & CSQA & OBQA & WG & WGrande & BoolQ \\
\midrule
\multirow{4}{*}{6}
& Residual & 39.8 & 27.0  & \textbf{29.0} & \underline{60.4} & \textbf{42.8} & 21.8 & \underline{55.0} & \underline{31.8} & 23.8 & \textbf{59.3} & 49.2 & 37.4 \\
& mHC                & 40.8 & 26.4 & \underline{28.4} & \textbf{60.8} & 42.8 & 22.4 & \textbf{60.0} & 23.0 & 24.8 & \underline{55.3} & \textbf{50.4} & \underline{54.4} \\
& mHC-lite  & \underline{41.0} & \underline{27.4} & 28.2 & 56.4 & \underline{42.8} & \textbf{24.4} & 53.0 & \textbf{35.0} & \textbf{25.6} & 52.4 & 50.0 & 54.4 \\
\rowcolor{blue!10}
& KromHC \textbf{(Ours)} & \textbf{41.1}  & \textbf{28.0} & 27.8 & 59.0 & 41.4 & \underline{23.0} & 54.0 & 28.8 & \underline{25.4} & 53.5 & \underline{50.0} & \textbf{60.8} \\
\midrule

\multirow{4}{*}{12}
&Residual
& 46.2 & \textbf{36.6} & 35.4 & \textbf{65.4} & 58.4 & 26.4 & 63.0 & \textbf{36.6} & \underline{32.6} & 57.5 & 52.0 & 44.0 \\
&mHC
& \underline{47.5} & 36.2 & \underline{37.6} & 64.2 & \textbf{58.6} & \underline{27.4} & \underline{64.0} & \underline{32.4} & \textbf{33.6} & \textbf{60.4} & \underline{53.2} & \underline{54.6} \\
&mHC-lite
& 44.4 & 35.2 & 36.2 & 64.6 & \textbf{58.6} & 27.2 & 63.0 & 23.6 & 30.4 & 58.6 & 49.2 & 41.6 \\
\rowcolor{blue!10}
&KromHC \textbf{(Ours)}
& \textbf{47.7} & \underline{36.4} & \textbf{38.4} & \underline{65.0} & 57.8 & \textbf{27.6} & \textbf{66.0}
& 30.0 & 31.2 & \underline{58.6} & \textbf{54.8} & \textbf{58.4} \\
\bottomrule
\end{tabular}
\label{tab:commonsense}
\end{table*}

\begin{table*}[t]
\centering
\caption{Language modeling, BigBench (BBH) subtasks, and evaluation suite results (accuracy \%).  $D = 6 \text{ or } 12$ transformer blocks and $n=4$ residual streams were used for the experiments. The best and second best values among different methods are highlighted in \textbf{bold} and \underline{underlined}, respectively.}
\small
\setlength{\tabcolsep}{5pt}
\begin{tabular}{clccccccccccc}
\toprule
$D$ & Method
& Avg & Lamb & SQuAD & CoQA
& BBH-QA & BBH-CS & BBH-Op & BBH-Dyck & LSAT & LangID \\
\midrule
\multirow{4}{*}{6}
& Residual & 16.2 & 18.6 & 0.0 & \underline{5.6} & \textbf{20.0} & 40.8 & \textbf{9.1} & 4.6 & 23.5 & 23.4 \\
& mHC                & 15.4 & 17.4 & 0.0 & 4.8 & 10.6 & \textbf{42.0} & 5.2 & \underline{9.2} & 23.5 & 26.0 \\
& mHC-lite           & \underline{16.9} & \textbf{19.6} & \textbf{0.4} & 4.6 & \underline{19.4} & 38.2 & 5.7 & 6.4 & \textbf{29.6} & \textbf{28.0} \\
\rowcolor{blue!10}
& KromHC    \textbf{(Ours)}    & \textbf{17.3} & \underline{19.0} & \underline{0.2} & \textbf{5.8} & 14.0 & \underline{40.8} & \underline{8.6} & \textbf{11.4} & \underline{27.8} & \underline{27.8} \\
\midrule
\multirow{4}{*}{12}
&Residual
& \underline{23.7} & 29.2 & \textbf{10.8} & 13.8
& 39.2 & 38.8 & 12.9 & \textbf{15.8} & \textbf{27.8} & \underline{25.4} \\
&mHC
& 22.9 & \textbf{31.6} & 5.8 & 13.0
& \underline{39.6} & 42.0 & \textbf{16.7} & 13.0 & 20.4 & 24.0 \\
&mHC-lite
& 23.3 & 30.0 & \underline{8.4} & \underline{14.2}
& 36.6 & \underline{42.6} & \underline{14.8} & 10.0 & \underline{27.0} & \textbf{26.2} \\
\rowcolor{blue!10}
&KromHC \textbf{(Ours)}
& \textbf{24.0} & \underline{30.4} & 8.2 & \textbf{15.4}
& \textbf{40.4} & \textbf{44.6} & 11.9 & \underline{13.6} & 26.1 & 25.0 \\
\bottomrule
\end{tabular}
\label{tab:language_modeling}
\end{table*}

\section{Experiments}
The evaluation of the training and downstream performance of the proposed KromHC on LLM pretraining was performed on two scales: $\sim$ 60M parameters ($D=6$ transformer blocks) and $\sim$ 186M parameters ($D=12$ transformer blocks) by replacing the residual connections in Nanochat \cite{nanochat} (See Section \ref{sec:res_nanochat} for more details). All models were trained on the $\texttt{FineWeb-Edu}$ \cite{penedo2024fineweb} dataset with a Token:Parameter ratio of $\sim$ 20 following \citet{hoffmann2022training}. Experiments were conducted using either 4 or 8 NVIDIA RTX PRO 6000 GPUs, depending on the number of residual streams. Experimental results demonstrate that KromHC matches or outperforms SOTA mHC variants, while using significantly fewer trainable parameters.

\subsection{Initialization}
Following the experimental settings in \citet{mhc-lite}, we initialized $\mathbf{W}_l^{\text{res},k}$, $\mathbf{W}_l^{\text{pre}}$ and $\mathbf{W}_l^{\text{post}}$ to zero. The bias vectors $\mathbf{b}_l^{\text{pre}}$ and $\mathbf{b}_l^{\text{post}}$ were set to -1 for all entries except for a single index in each vector, which was set to 1. We set $\alpha_l^{\text{pre}}$ and $\alpha_l^{\text{post}}$ to 0.01. 

For $\{i_k=2\}_{k=1}^K$, there are exactly two permutation matrices: $\mathbf{P}_1=\begin{bmatrix}1 & 0 \\ 0 & 1\end{bmatrix}$ and $\mathbf{P}_2=\begin{bmatrix}0 & 1 \\ 1 & 0\end{bmatrix}$.  The $\mathbf{b}_l^{\text{res},k}$ was set to $[0, -8]^T$, and $\alpha_l^{\text{res}}$ was set to 0.01.This initialization ensures that, at initialization, $\mathbf{a}_l^k (1)\approx1$ and $\mathbf{a}_l^k(2)\approx0$, yielding $\mathbf{U}_l^{k}\approx\mathbf{I}_{2 \times 2}$. Consequently, the Kronecker products of identity matrices produced a nearly identity matrix $\mathbf{H}_l^{\text{res}}$ at initialization.  

\subsection{Training and Validation Set Metrics} 
We compared the performances of standard residual connection, mHC, mHC-lite and our proposed KromHC methods under both $6$ and $12$ transformer blocks with $n=4$ residual streams. The results are shown in Table \ref{tab:loss}. The training loss denotes the cross-entropy (CE) loss $\mathcal{L}_{\text{CE}}=-\frac{1}{T}\sum_{t=1}^T \log p_{\theta}(x_t | x_{<t})$, while the validation performance is measured using a tokenizer-invariant metric, bits-per-bytes (BPB) (i.e., $\mathcal{L}_{BPB} = \frac{\mathcal{L}_{\text{CE}}}{\ln(2)} \times \frac{\text{Total Tokens}}{\text{Total Bytes}}$). CORE score \cite{core_score} is the centered accuracy computed over a fixed subset of 22 downstream evaluation tasks to reflect general language understanding quality (See more details in Appendix \ref{core_details}).

Notably, our method significantly outperformed SOTA mHC variants in terms of the CORE score, indicating that the models trained with KromHC have stronger capabilities in downstream tasks including commonsense reasoning and language modeling. Additionally, our method achieved on-par training loss and validation BPB as mHC and mHC-lite while having much fewer additional learnable parameters compared to mHC and mHC-lite. 

\subsection{Downstream Task Performances}

Table \ref{tab:commonsense} and \ref{tab:language_modeling} present the detailed performance evaluations for commonsense reasoning and language modeling, respectively. We compared the proposed KromHC with the residual connection, mHC and mHC-lite. All models were trained under identical settings with $6$ or $12$ transformer blocks and $n=4$ residual streams. 

\paragraph{Commonsense Reasoning.} Our method achieved the highest average accuracies at both 6-block ($41.1\%$) and 12-block ($47.7\%$) settings, consistently outperforming standard residual connections and other SOTA manifold-constrained HC variants. In particular, KromHC demonstrates strong capabilities in reasoning-intensive tasks such as COPA and BoolQ, surpassing the second best scores by up to $2\%$ and $6.4\%$ respectively. The consistent improvements across both model depths suggest that KromHC scales effectively with depth, while remaining robust across diverse commonsense reasoning tasks. These results demonstrate that KromHC is beneficial for reasoning tasks.

\paragraph{Language Modeling.} KromHC also achieved the best average performance ($17.3\%$ and $24.0\%$) in language modeling at $D=6$ and $D=12$. These results suggest that KromHC is effective for improving the language modeling performance in LLM pretraining, which is essential for language understanding.

\begin{table}[!ht]
\caption{Additional number of  parameters of our KromHC models with different widths of residual streams compared to the standard residual connection under $D=12$.}
\centering
\begin{tabular}{cccc}
\toprule
 & $n=4$ & $n=8$ & $n=16$ \\
\midrule
$\Delta$ \text{Params (M)} & 0.96 & 2.67 & 11.37 \\
\bottomrule
\end{tabular}
\label{tab:param_scale_n}
\end{table}

\subsection{Scaling the Width of Residual Stream in KromHC}
In order to assess how the performance of KromHC scales with the width of the residual stream $n$, we conducted experiments on LLM pretraining with $12$ transformer blocks and $n\in \{4,8,16\}$ residual stream width. As shown in Figure \ref{fig:scalability} (left), the gap between training losses becomes larger as $n$ increases. A similar scaling trend is observed in validation, where BPB consistently improves as $n$ increases (See Figure \ref{fig:scalability} (right)). The additional number of learnable parameters at different $n$ are recorded in Table \ref{tab:param_scale_n}. These results demonstrate that KromHC benefits from larger residual stream width and scales effectively with $n$.

\begin{figure}[ht]
  \begin{center}
    \centerline{\includegraphics[width=\columnwidth]{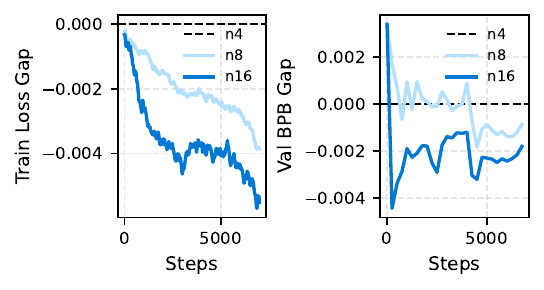}}
    \caption{
    Training loss and validation BPB gaps of KromHC at different widths of the residual stream, $n$, compared to $n=4$. Exponential Moving Average (EMA) is applied to the raw loss before the calculation of the loss gap.
    }
    \label{fig:scalability}
  \end{center}
\end{figure}
\subsection{Gradient Norm}
Figure \ref{fig:grad_dynamics} presents the gradient norm trajectories across the last $2000$ training steps. Identical model configurations ($12$ transformer blocks and $n=4$ residual streams) were used for mHC, mHC-lite and our KromHC. It is worth noting that our KromHC consistently achieved the lowest gradient norm compared with other manifold-constrained hyper-connection variants. Both mHC-lite and KromHC achieved lower gradient norms than mHC due to their exactly doubly stochastic residual matrices (See Figure \ref{fig:deviation_plot}). This indicates improved training stability in KromHC and suggests that KromHC can control gradient magnitudes more effectively during training.

\begin{figure}[!ht]
    \centering
    \includegraphics[width=\columnwidth]{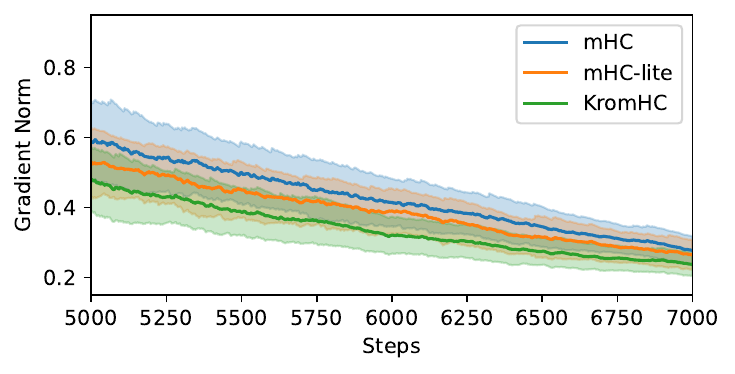}
    \caption{Zoomed-in view of gradient norms from 5000 to 7000 steps during training. Trajectories are smoothed using EMA, with shaded regions indicating the EMA variance.}
    \label{fig:grad_dynamics}
\end{figure}

\subsection{Ablation Study}
\paragraph{Shared $\alpha_l^{\text{res}}$.} We examined whether the scaling factor $\alpha_l^{\text{res}}$ should be shared across all $\mathbf{U}^k_l$ or be unique for each matrix, i.e. $\alpha_l^{\text{res}}$ versus $\alpha_l^{\text{res},k}$. In Equation \eqref{eq:kromhc_definition}, the mixing coefficients for permutation matrices were computed as
\begin{equation}
    \mathbf{a}_l^{k} = \text{Softmax}(\alpha_l^{\text{res}} \mathbf{x}_l'\mathbf{W}_l^{\text{res}, k} + \mathbf{b}_l^{\text{res}, k}).
\end{equation}
As shown in Figure \ref{fig:ablation_k_dep_alpha}, sharing $\alpha_l^{\text{res}}$ across all $\mathbf{U}^k_l$ yields better performance than learning unique $\alpha_l^{\text{res},k}$ for each matrix.

\begin{figure}[ht]
  \begin{center}
    \centerline{\includegraphics[width=\columnwidth]{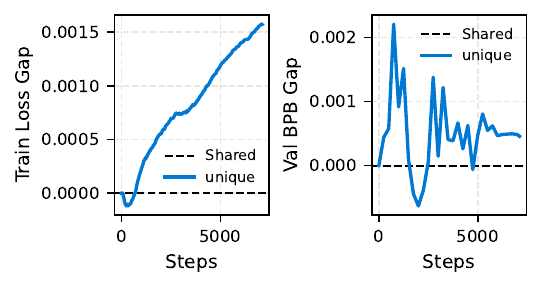}}
    \caption{
      Sharing $\alpha_l^{\text{res}}$ across all doubly stochastic matrices $\mathbf{U}_l^k$ outperforms the use of matrix-specific $\alpha_l^{\text{res},k}$. The experiment was conducted with $12$ transformer blocks and $n=4$ residual streams.
    }
    \label{fig:ablation_k_dep_alpha}
  \end{center}
\end{figure}

\section{Conclusion}
We have introduced KromHC, a parameter-efficient manifold-constrained hyper-connection framework which employs Kronecker-product residual matrices to guarantee their exact double stochasticity. In this way, KromHC also resolves the scalability limitations of existing mHC variants regarding parameter complexity. Extensive experiments have demonstrated the effectiveness of the proposed method in LLM pre-training. Our future work aims to apply KromHC to other domains such as computer vision.  

\paragraph{Limitations.} KromHC may encounter parameter issues when the width of the residual stream $n$ is a large prime number. However, this can be mitigated by using a larger $n$ which is a power of $2$ or $3$ or exhibits a prime factorization consisting of small numbers.

\section*{Impact Statement}
This work resolves the scalability and stability issues of manifold-constrained hyper-connections which advances the field of Machine Learning. By enabling more reliable training with fewer parameters, the proposed KromHC supports more accessible and sustainable future deployment of advanced AI systems.

\bibliography{example_paper}
\bibliographystyle{icml2026}

\clearpage
\newpage
\appendix

\section{Kronecker Product}
Kronecker products provide a convenient way to represent structured linear operators. For matrices $\mathbf{A}\in\mathbb{R}^{m\times n}$ and $\mathbf{B}\in\mathbb{R}^{p\times q}$,
their Kronecker product $\mathbf{A}\otimes \mathbf{B} \in \mathbb{R}^{(mp)\times(nq)}$
is defined as
\begin{equation}
\mathbf{A}\otimes \mathbf{B} =
\begin{bmatrix}
a_{11}\mathbf{B} & \cdots & a_{1n}\mathbf{B}\\
\vdots  & \ddots & \vdots\\
a_{m1}\mathbf{B} & \cdots & a_{mn}\mathbf{B}
\end{bmatrix},
\end{equation}
where each entry of $\mathbf{A}$ scales the entire matrix $\mathbf{B}$. For example, a Kronecker product between two $2\times2$ matrices yields a $4\times4$ matrix. Let
\[
\mathbf{A}=
\begin{bmatrix}
1 & 2\\
3 & 4
\end{bmatrix},
\qquad
\mathbf{B}=
\begin{bmatrix}
0 & 5\\
6 & 7
\end{bmatrix}.
\]
The Kronecker product $\mathbf{A}\otimes \mathbf{B}\in\mathbb{R}^{4\times4}$ is
\begin{equation}
\mathbf{A}\otimes \mathbf{B}=
\begin{bmatrix}
1\mathbf{B} & 2\mathbf{B}\\
3\mathbf{B} & 4\mathbf{B}
\end{bmatrix}
=
\begin{bmatrix}
0 & 5 & 0 & 10\\
6 & 7 & 12 & 14\\
0 & 15 & 0 & 20\\
18 & 21 & 24 & 28
\end{bmatrix}.
\end{equation}

The Kronecker product naturally extends to multiple matrices.
Let $\mathbf{A}^{k} \in \mathbb{R}^{m_k \times n_k}$ for $k=1,\dots,K$.
Their Kronecker product is defined recursively as
\begin{equation}
\bigotimes_{k=K}^1 \mathbf{A}^{i}
=
\mathbf{A}^{K} \otimes \mathbf{A}^{K-1} \otimes \cdots \otimes \mathbf{A}^{1},
\end{equation}
and the resulting matrix has size $\left(\prod_{k=1}^K m_k\right) \times
\left(\prod_{k=1}^K n_k\right)$.

\begin{figure*}[!t]
    \centering
    \includegraphics[width=\textwidth]{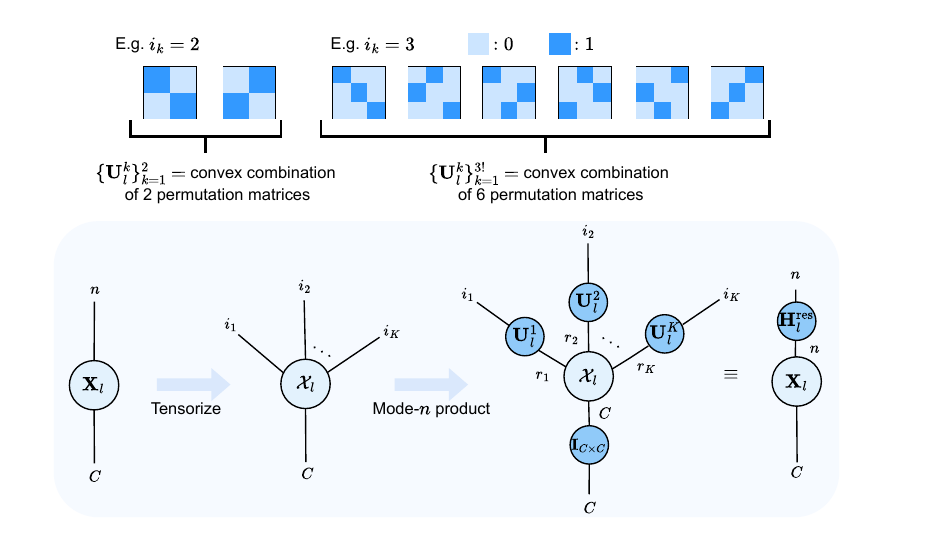}
    \caption{Tensor network diagram of the proposed KromHC method.}
    \label{fig:KromHC_TN}
\end{figure*}

\section{Proof for Theorem \ref{theo:kroneck_doubly_stoch_main_app}}
\label{sec:kroneck_doubly_stoch_main}
\begin{theorem}\label{theo:kroneck_doubly_stoch_main_app}
(\textbf{Kronecker Closure of Doubly Stochastic Matrices}) Let $\mathcal{B}_n \subset \mathbb{R}^{n \times n}$ denote the set of $n \times n$ doubly stochastic matrices. Let $\mathbf{U}_l^1 \in \mathcal{B}_{i_1}$ and $\mathbf{U}_l^2 \in \mathcal{B}_{i_2}$. Then their Kronecker product satisfies
\begin{equation}
\mathbf{U}_l^1 \otimes \mathbf{U}_l^2 \in \mathcal{B}_{i_1i_2}.
\end{equation}
More generally, for any finite collection $\{\mathbf{U}_k \in \mathcal{B}_{i_k}\}_{k=1}^K$, their iterated Kronecker product satisfies 
\begin{equation}
\bigotimes_{k=K}^1 \mathbf{U}_k \in \mathcal{B}_n, \text{ where } n = \prod_{k=1}^K i_k.
\end{equation}
\end{theorem}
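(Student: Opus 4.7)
The plan is to verify the three defining properties of doubly stochastic matrices (entrywise non-negativity, unit row sums, and unit column sums) for $\mathbf{U}_l^1 \otimes \mathbf{U}_l^2$ directly, and then extend to arbitrary $K$ by induction. The main tools are two standard Kronecker-product identities: the mixed-product property $(\mathbf{A}\otimes \mathbf{B})(\mathbf{C}\otimes \mathbf{D}) = (\mathbf{A}\mathbf{C}) \otimes (\mathbf{B}\mathbf{D})$, the transpose identity $(\mathbf{A}\otimes \mathbf{B})^\top = \mathbf{A}^\top \otimes \mathbf{B}^\top$, and the factorization $\mathbf{1}_{i_1 i_2} = \mathbf{1}_{i_1}\otimes \mathbf{1}_{i_2}$ of the all-ones vector.

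For the base case (two factors), I would first note that since every entry of $\mathbf{U}_l^1 \otimes \mathbf{U}_l^2$ is a product of one entry of $\mathbf{U}_l^1$ and one entry of $\mathbf{U}_l^2$, both of which are non-negative by assumption, the entire Kronecker product is entrywise non-negative. For the row-sum property, I would compute
\begin{equation}
(\mathbf{U}_l^1 \otimes \mathbf{U}_l^2)\,\mathbf{1}_{i_1 i_2}
= (\mathbf{U}_l^1 \otimes \mathbf{U}_l^2)(\mathbf{1}_{i_1} \otimes \mathbf{1}_{i_2})
= (\mathbf{U}_l^1 \mathbf{1}_{i_1}) \otimes (\mathbf{U}_l^2 \mathbf{1}_{i_2})
= \mathbf{1}_{i_1}\otimes\mathbf{1}_{i_2} = \mathbf{1}_{i_1 i_2},
\end{equation}
using the mixed-product property and the row-sum property of each factor. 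The column-sum property follows by the same computation applied to $(\mathbf{U}_l^1\otimes\mathbf{U}_l^2)^\top = {\mathbf{U}_l^1}^\top \otimes {\mathbf{U}_l^2}^\top$, invoking the column-sum property of each factor.

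For the general case I would argue by induction on $K$. The case $K=1$ is trivial, and $K=2$ is handled above. For the inductive step, suppose $\bigotimes_{k=K-1}^1 \mathbf{U}_k \in \mathcal{B}_{i_1\cdots i_{K-1}}$. Writing $\bigotimes_{k=K}^1 \mathbf{U}_k = \mathbf{U}_K \otimes \bigl(\bigotimes_{k=K-1}^1 \mathbf{U}_k\bigr)$ and applying the two-factor case concludes the argument, with $n = i_K \cdot (i_1 \cdots i_{K-1}) = \prod_{k=1}^K i_k$.

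There is no real obstacle here; the result is a direct consequence of the algebraic compatibility between the Kronecker product and the vector $\mathbf{1}_n$. The only mild care needed is bookkeeping around the ordering convention $\bigotimes_{k=K}^1 \mathbf{U}_k$ adopted in the paper and the consistent factorization $\mathbf{1}_n = \mathbf{1}_{i_K}\otimes \mathbf{1}_{i_{K-1}}\otimes \cdots \otimes \mathbf{1}_{i_1}$, which follows associatively from $\mathbf{1}_{ab} = \mathbf{1}_a \otimes \mathbf{1}_b$. Once this is fixed, non-negativity and both stochasticity conditions drop out in one line each.
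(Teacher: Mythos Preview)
Your proposal is correct and follows essentially the same approach as the paper: verify non-negativity entrywise, use the mixed-product identity together with $\mathbf{1}_{i_1 i_2}=\mathbf{1}_{i_1}\otimes\mathbf{1}_{i_2}$ for the row- and column-sum conditions, and then induct on $K$. The only cosmetic difference is that you state the transpose identity $(\mathbf{A}\otimes\mathbf{B})^\top=\mathbf{A}^\top\otimes\mathbf{B}^\top$ explicitly, whereas the paper applies it implicitly in the column-sum step.
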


\begin{proof}
For any doubly stochastic matrix, its elements are non-negative, and its row and column sums are equal to one. Let $\mathbf{U}_l^1 \in \mathcal{B}_{i_1}$ and $\mathbf{U}_l^2 \in \mathcal{B}_{i_2}$. The Kronecker product $\mathbf{U}_l^1 \otimes \mathbf{U}_l^2$ yields elements of $\mathbf{U}_l^1(i,j) \mathbf{U}_l^2(k,l)$. The product of two non-negative real number is non-negative. Therefore, $\mathbf{U}_l^1 \otimes \mathbf{U}_l^2 \ge 0$.

Let $\mathbf{1}_{i_1}$ and $\mathbf{1}_{i_2}$ denote all-one column vectors of dimensions $i_1$ and $i_2$, respectively. We use the Kronecker product identity
\begin{equation}
    (\mathbf{A} \otimes \mathbf{B})(\mathbf{C} \otimes \mathbf{D}) = (\mathbf{A}\mathbf{C}) \otimes (\mathbf{B}\mathbf{D}),
\end{equation}
to obtain
\begin{equation}
\begin{split}
    (\mathbf{U}_l^1  \otimes \mathbf{U}_l^2)(\mathbf{1}_{i_1} \otimes \mathbf{1}_{i_2})
&= (\mathbf{U}_l^1 \mathbf{1}_{i_1}) \otimes (\mathbf{U}_l^2\mathbf{1}_{i_2}) \\
&= \mathbf{1}_{i_1} \otimes \mathbf{1}_{i_2}
= \mathbf{1}_{i_1i_2}.
\end{split}
\end{equation}
Therefore, all row sums of $\mathbf{U}_l^1  \otimes \mathbf{U}_l^2$ equal $1$.

Similarly,
\begin{equation}
\begin{split}
    (\mathbf{U}_l^1  \otimes \mathbf{U}_l^2)^\top (\mathbf{1}_{i_1} \otimes \mathbf{1}_{i_2})
&= ({\mathbf{U}_l^1}^\top \mathbf{1}_{i_1}) \otimes ({\mathbf{U}_l^2}^\top \mathbf{1}_{i_2})\\
&= \mathbf{1}_{i_1i_2}.
\end{split}
\end{equation}
Therefore, all column sums of $\mathbf{U}_l^1  \otimes \mathbf{U}_l^2$ equal $1$.

Combining the non-negative property with row and column sums of $1$, we have showed that $\mathbf{U}_l^1 \otimes \mathbf{U}_l^2$ is doubly stochastic (i.e., $\mathbf{U}_l^1 \otimes \mathbf{U}_l^2\in \mathcal{B}_{i_1i_2}$).

By induction, this result extends to a finite collection of $\{\mathbf{U}_k \in \mathcal{B}_{i_k}\}_{k=1}^K$, as follows
\begin{equation}
\bigotimes_{k=K}^1 \mathbf{U}_k \in \mathcal{B}_n, \text{ where } n = \prod_{k=1}^K i_k.
\end{equation}
\end{proof}

\section{Details of the CORE Tasks}\label{core_details}

Authors in \cite{core_score} proposed the CORE metric to provide a robust low-variance, centered accuracy score for LLM evaluation. There are 22 selected tasks, where in each task, accuracy is linearly scaled so that 0 indicates random-guess performance and 1 implies perfect accuracy. The final CORE score is averaged across all 22 tasks, preventing any single benchmark dominating the calculation.

The tasks in CORE experiments span logical reasoning, factual recall, algorithmic thinking, commonsense inference, and language understanding. In particular, it includes reasoning and knowledge tasks \cite{zhong2024agieval,clark2018think}, BIG-Bench tasks \cite{srivastava2023beyond}, Question Answering and Commonsense \cite{boolq, talmor2019commonsenseqa, roemmele2011choice}, and other widely used benchmarks \cite{hellaswag, winograd, winogrande}.

\section{Tensor Network Diagram of KromHC}
Figure \ref{fig:KromHC_TN} shows the TN diagram of our KromHC method. In a tensor network (TN) diagram, a tensor is denoted by a circle, with each line emanating from the circle corresponding to a tensor mode index. Also, connecting two index lines implies a tensor contraction over the connected mode indices.

\section{Parametrization of HC}
In this section, we detail the parametrization of HC when it was first proposed by \citet{HC}. Given the input hidden matrix $\mathbf{X}_l \in \mathbb{R}^{n \times C}$ at the $l$-th layer, the dynamic mappings and the static mappings are obtained as 

\begin{equation}
\left\{
\begin{aligned}
\mathbf{X}'_l &= \mathrm{RMSNorm}(\mathbf{X}_l), \\
\mathbf{H}^{\text{pre}}_l &= \alpha^{\text{pre}}_l \cdot \tanh\left(
\mathbf{W}^{\text{pre}}_l \mathbf{X}'^T_l \right)
+ \mathbf{b}^{\text{pre}}_l, \\
\mathbf{H}^{\text{post}}_l &= \alpha^{\text{post}}_l \cdot \tanh\left(
\mathbf{W}^{\text{post}}_l\mathbf{X}'^T_l \right)
+ \mathbf{b}^{\text{post}}_l, \\
\mathbf{H}^{\text{res}}_l &= \alpha^{\text{res}}_l \cdot
\tanh\left(\mathbf{W}^{\text{res}}_l\mathbf{X}'^T_l \right)
+ \mathbf{b}^{\text{res}}_l,
\end{aligned}
\right.
\end{equation}
where $\mathbf{W}^{\text{pre}}_l, \mathbf{W}^{\text{post}}_l \in \mathbb{R}^{1 \times C}$
and $\mathbf{W}^{\text{res}}_l \in \mathbb{R}^{n \times C}$ are linear projections for dynamic mappings, $\mathbf{b}^{\text{pre}}_l,\mathbf{b}^{\text{post}}_l \in \mathbb{R}^{1 \times n}$ and $\mathbf{b}^{\text{res}}_l \in \mathbb{R}^{n \times n}$ are learnable bias terms, and $\mathrm{RMSNorm}(\cdot)$ normalizes the feature dimension $C$. \citet{mHC} have discovered that HC can be unstable to train. Also, this formalization of HC does not allow for cross-stream input-dependent residual mixing like in mHC, which may limit its expressivity. Therefore, a better way to implement HC is to keep its parameterization similar to mHC and not use the Sinkhorn-Knopp algorithm, as their functional difference is the manifold constraint.

\section{Parametrization of mHC-lite}
\label{sec:mHC_lite_parameterization}

In this section, we detail the parametrization of mHC-lite \cite{mhc-lite}. Let $\mathbf{X}_l \in \mathbb{R}^{n \times C}$ denote the input feature in the $l$-th layer and $\mathbf{x}_l \in \mathbb{R}^{1 \times nC}$ denote the flattened input feature. Then we build mappings $\mathbf{H}^{\text{res}}_l$, $\mathbf{H}^{\text{pre}}_l$, and
$\mathbf{H}^{\text{post}}_l$ dynamically based on $\mathbf{x}_l$ as 
\begin{equation}
\left\{
\begin{aligned}
\mathbf{x}'_l &= \mathrm{RMSNorm}(\mathbf{x}_l), \\
\mathbf{H}^{\text{pre}}_l &= \mathrm{sigmoid}\!\left(
\alpha^{\text{pre}}_l \, \mathbf{x}'_l \mathbf{W}^{\text{pre}}_l
+ \mathbf{b}^{\text{pre}}_l
\right), \\
\mathbf{H}^{\text{post}}_l &= 2 \cdot \mathrm{sigmoid}\!\left(
\alpha^{\text{post}}_l \, \mathbf{x}'_l \mathbf{W}^{\text{post}}_l
+ \mathbf{b}^{\text{post}}_l
\right), \\
\mathbf{a}_l &= \mathrm{softmax}\!\left(
\alpha^{\text{res}}_l \, \mathbf{x}'_l \mathbf{W}^{\text{res}}_l
+ \mathbf{b}^{\text{res}}_l
\right), \\
\mathbf{H}^{\text{res}}_l &= \sum_{k=1}^{n!} \mathbf{a}_{l}(k) \mathbf{P}_k ,
\end{aligned}
\right. 
\end{equation}

where $\mathbf{W}^{\text{pre}}_l, \mathbf{W}^{\text{post}}_l \in \mathbb{R}^{nC \times n}$ and $\mathbf{W}^{\text{res}}_l \in \mathbb{R}^{nC \times n!}$ are learnable weight matrices in the $l$-th layer. Here, $\mathbf{b}^{\text{pre}}_l,\mathbf{b}^{\text{post}}_l \in \mathbb{R}^{1 \times n}$ and $\mathbf{b}^{\text{res}}_l \in \mathbb{R}^{1 \times n!}$ are learnable bias terms. The terms $\alpha^{\text{pre}}_l$, $\alpha^{\text{post}}_l$, and $\alpha^{\text{res}}_l$ are learnable scalars. $\mathbf{P}_m \in \mathbb{R}^{n \times n}$ are permutation matrices.

\section{Nanochat} 
\label{sec:res_nanochat}
Each transformer block uses two residual connections, one for the attention mechanism and one for the FFN. Besides the standard residual connections $\mathbf{x}_{l+1} = \mathbf{x}_{l} + \mathcal{F}(\mathbf{x}_{l})$, Nanochat \cite{nanochat} introduces learnable per-layer scalars which improves the model performance. More specifically, each layer's input is calcualted as $\tilde{\mathbf{x}}_l = \lambda^{\text{resid}}_l \cdot \mathbf{x}_l + \lambda^{x_0}_l \cdot \mathbf{x}_0$, where $\mathbf{x}_0$ is the initial embedding and $\lambda^{\text{resid}}_l, \lambda^{x_0}_l$ are learnable scalars initialized to $1$ and $0$ respectively~\cite{modded-nanogpt,wang2024deepnet}. These were all replaced with the mHC variants when examining the performance of mHC variants.

\section{Hyperparameters}

Table \ref{tab:hyperparams} lists the hyperparameters used in our experiments. Note that Muon optimizer \cite{liu2025muon} is used for learning parameters in the main branch including attention and multi-layer perceptron (MLP) weight matrices, while AdamW optimizer \cite{loshchilov2018decoupled} is used for the hyper-connections streams, embedding layer, and language modeling (LM) head layer. Additionally, following the best practice in \citet{nanochat}, different learning rates (LR) are used for embedding layer, LM layer, main branch and hyper-connections branch. 
\begin{table}[h]
\centering
\small
\caption{Hyperparameters used in our experiments.}
\label{tab:hyperparams}
\begin{tabular}{lc}
\toprule
Name & Value \\
\midrule
Batch size & 524,288 \\
Sequence length & 2048 \\
Head dimension & 128 \\
\% Steps for LR warmdown & 0.4 \\
LR for main branch (Muon) & 0.02 \\
Weight decay of Muon optimizer & 0.2 \\
LR for residual connections (AdamW) & 0.005 \\
AdamW $\beta_1$ & 0.8 \\
AdamW $\beta_2$ & 0.95 \\
\bottomrule
\end{tabular}
\end{table}
For the two different model scales, i.e. $D=6$ and $D=12$ transformer blocks, the corresponding scale-specific hyperparameters are listed in Table \ref{tab:scale_specific_hyperparams}.

\begin{table}[h]
\centering
\small
\setlength{\tabcolsep}{3pt}
\caption{Scale-specific hyperparameters used in our experiments for $n=4$ residual streams.}
\label{tab:scale_specific_hyperparams}
\begin{tabular}{lcc}
\toprule
Name & $D=6$ & $D=12$ \\
\midrule
Hidden dimension & 384 & 768 \\
LR for embedding params (AdamW) & 0.43 & 0.3 \\
LR for LM head params (AdamW) & 0.0057 & 0.004 \\
\# Training steps & 2500 & 7000 \\
\bottomrule
\end{tabular}
\end{table}

\section{Grad Norm}
Figure \ref{fig:grad_dynamics_overall} shows the raw gradient norm across 7000 training steps of mHC, mHC-lite and KromHC at $D=12$.
\begin{figure}[hb]
    \centering
    \includegraphics[width=0.85\columnwidth]{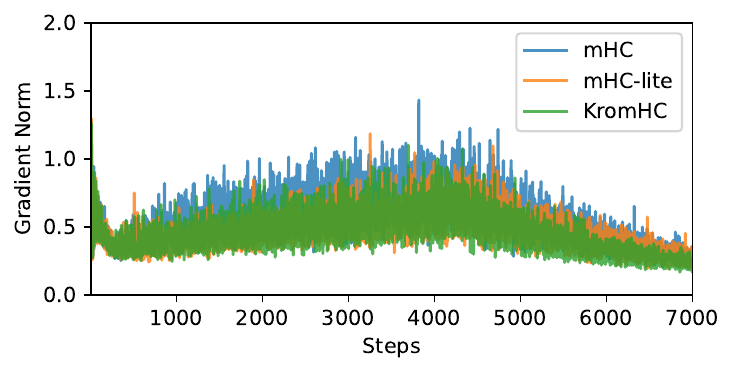}
    \caption{Gradient norm dynamics across training. Raw gradient norm across 7000 training steps. }
    \label{fig:grad_dynamics_overall}
\end{figure}

\section{System Metrics}
\label{sec:System Metrics}
In Table \ref{tab:relative-performance}, we report the system efficiency of different manifold-constrained hyper-connections ($n=4$) relative to mHC. These were implemented in pure PyTorch using 12 transformer blocks and evaluated on 8 RTX 6000 GPUs. KromHC is more computationally efficient than mHC and mHC-lite in terms of higher throughput, lower wall-clock per step and a lower GPU memory footprint. Notice that we implement the mHC baseline purely based on PyTorch and may underestimate the metrics of the specialized-kernel
implementation in \citet{mHC}.
\begin{table}[h]
\centering
\caption{Relative percentage change in the system metrics (token throughput, wall-clock per step, and GPU memory footprint) compared to mHC. ``-'' indicates no difference.}
\label{tab:relative-performance}
\begin{tabular}{lccc}
\toprule
Method & Throughput & Wall-clock & Memory \\
       & $\Delta$ (\%) $\uparrow$ & $\Delta$ (\%) $\downarrow$ & $\Delta$ (\%) $\downarrow$ \\
\midrule
mHC & - & - & - \\
mHC-lite & +9.39 & -8.86 & -1.04 \\
\textbf{KromHC} & \textbf{+19.23} & \textbf{-16.24} & \textbf{-1.15} \\
\bottomrule
\end{tabular}
\end{table}

\section{Factorization Choices}
To empirically examine whether more restricted factorizations will lead to worse performance, we conducted an ablation study at $n=8$ with 12 transformer blocks and different factorizations. The original mHC-lite configuration at $n=8$ is unrealistic due to model size explosion. Comparing the $4 \times 2$ with the more restricted $2 \times 2 \times 2$ factorization in KromHC, the $2 \times 2 \times 2$ achieves slightly better results, achieving a train loss $0.002$ lower than $4 \times 2$ and a validation BPB $0.003$ lower than $4 \times 2$. This confirms that KromHC resolves the exactness-efficiency trade-off without sacrificing model performance.

\end{document}